\newcommand{\hlc}[2][yellow]{{%
    \colorlet{foo}{#1}%
    \sethlcolor{foo}\hl{#2}}%
}
\algnewcommand{\LeftComment}[1]{\Statex \(\triangleright\) #1}
\theoremstyle{plain}
\newtheorem{theorem}{Theorem}[section]
\theoremstyle{definition}
\theoremstyle{remark}
\icmltitlerunning{CharED: Character-wise Ensemble Decoding for Large Language Models}
\begin{document}

\twocolumn[
\icmltitle{CharED: Character-wise Ensemble Decoding for Large Language Models}

\icmlsetsymbol{equal}{*}

\begin{icmlauthorlist}
\icmlauthor{Kevin Gu}{hu,equal}
\icmlauthor{Eva Tuecke}{hu,equal}
\icmlauthor{Dmitriy Katz}{ibm,comp}
\icmlauthor{Raya Horesh}{ibm}
\icmlauthor{David Alvarez-Melis}{hu,msr}
\icmlauthor{Mikhail Yurochkin}{ibm,comp}
\end{icmlauthorlist}

\icmlaffiliation{hu}{Harvard University}
\icmlaffiliation{msr}{Microsoft Research}
\icmlaffiliation{ibm}{IBM Research}
\icmlaffiliation{comp}{MIT-IBM Watson AI Lab}

\icmlkeywords{Machine Learning}

\vskip 0.3in
]

\printAffiliationsAndNotice{\icmlEqualContribution}

\begin{abstract}

Large language models (LLMs) have shown remarkable potential for problem solving, with open source models achieving increasingly impressive performance on benchmarks measuring areas from logical reasoning to mathematical ability. Ensembling models can further improve capabilities across a variety of domains. However, conventional methods of combining models at inference time such as shallow fusion necessitate a shared vocabulary and tokenization, and alternatives like fine-tuning for domain-specific performance are both time consuming and computationally expensive. We therefore present an inference-time ensembling algorithm aimed at ``averaging'' outputs from multiple LLMs and illustrate its improved performance across multiple domains compared to its constituent models alone. Character-wise ensemble decoding (\textsc{CharED}) finds the marginal distribution of each character for an individual model and performs a weighted average to generate an output, character by character. In coding, math, and toxicity benchmarks, we find our proposed model able to combine complimentary strengths of multiple LLMs, regardless of vocabulary, tokenization, or model size.
\end{abstract}

\begin{figure}[h!]
\begin{center}
\centerline{\includegraphics[width=.85\columnwidth]{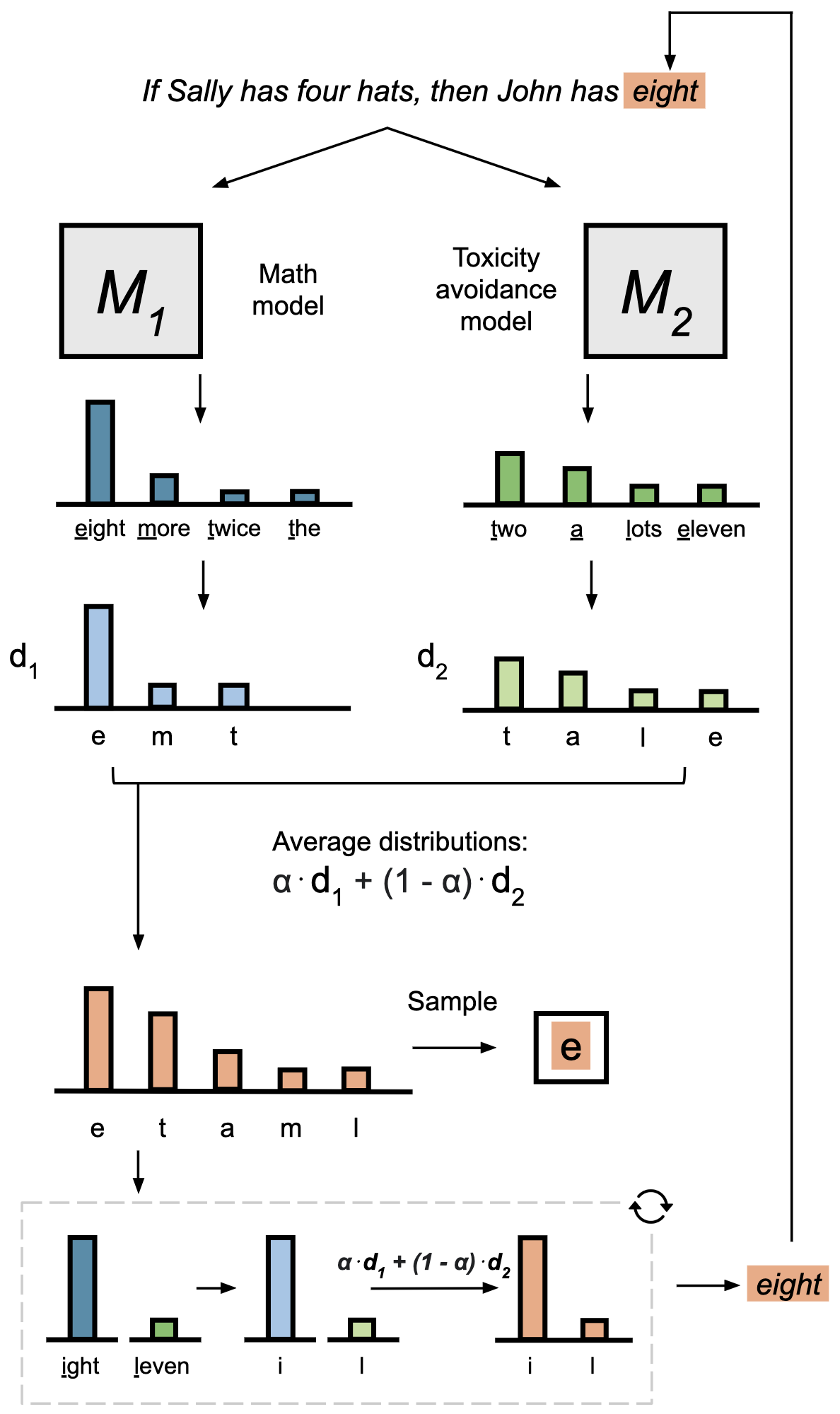}}
\caption{\textbf{Our \textsc{CharED} algorithm ensembles models character by character while decoding.} Model prompt: ``Sally has four hats, and John has twice as many. How many total hats are there?'' Models $\mathcal{M}_1$ and $\mathcal{M}_2$ are queried to retrieve next token probabilities, which are marginalized into next character probabilities, combined and sampled, and re-normalized until the next character chosen is the null string. This sequence is then added to the existing answer, which is fed back into both models.}
\label{graph1}
\end{center}
\vskip -0.3in
\end{figure}

\section{Introduction}
\label{introduction}
As large language models (LLMs) have become increasingly ubiquitous and powerful models have been open-sourced, there has been extensive research on methods to achieve improved task-specific performance from these models. The long-standing method for doing this is through fine-tuning, in which domain-specific datasets are used to update weights of large foundation models to improve performance on certain tasks. However, direct fine-tuning is both time-consuming and computationally intensive \cite{strubell-etal-2019-energy}. This problem will become worse as model sizes continue to grow, increasingly motivating more efficient fine-tuning \cite{lester-etal-2021-power, han2024parameterefficient} or alternative approaches \cite{hu2021lora} for enhancing or aligning LLM performance. 

Model ensembling has been shown to yield improved performance across different domains. An established method for doing this is through shallow fusion, which was originally used to integrate an LLM into a neural machine translation (NMT) model \cite{gulcehre2015using}. Such ensembling methods, which aggregate models during beam search, have shown promise for improving translation quality in NMT settings \cite{sutskever2014sequence, firat2016zeroresource, stahlberg-etal-2018-simple}, but require the same vocabulary and tokenization. Twist decoding \citep{kasai2022twist} modifies beam search to bypass the shared vocabulary restriction, but its reliance on beam search reduces the inference speed. Other more recent approaches related to combining language models include proxy tuning \cite{liu2024tuning} and Composition to Augment Language Models (CALM) \cite{bansal2024llm}. Proxy-tuning adjusts next-token predictions of a larger LLM using a pair of tuned and untuned smaller LMs, but is essentially limited to models from the same family, as it requires shared vocabulary. CALM can combine any LLMs via cross-attention but requires additional training.

Historically, major advances in LMs have come out of subword-level tokenization schemes, which gained traction for their flexibility \cite{yang2024rethinking}, including byte-pair encoding (BPE), SentencePiece, and WordPiece \cite{sennrich-etal-2016-neural, kudo-richardson-2018-sentencepiece, Devlin2019BERTPO, zhang-etal-2019-ernie}. These tokenization methods have generally outperformed character-based language modeling, like LSTMs and other RNNs. Character models come with added challenges, including a lack of lexical and morphological priors compared to word and subword-level tokenizers, higher compute resources, and much longer dependencies on prior text \cite{Al-Rfou_Choe_Constant_Guo_Jones_2019, 7953252}. 

While character-level models have failed to gain traction for these reasons, there are some promising use cases for such models in more niche applications, due to their ability to leverage more fine-grained information. One recent study \cite{edman2024characterlevel} fine-tuned a character-level model \cite{xue-etal-2022-byt5} and the model's subword-level counterpart \cite{xue-etal-2021-mt5} for neural machine translation tasks, and found that the character-level model produced improved translation and better cross-lingual generalizations. More generally, there is some evidence that character-level information can improve performance over other tokenization methods \cite{clark-etal-2022-canine}, particularly in low resource and high language variability settings \cite{riabi-etal-2021-character}.

This motivates further exploration into the relationships between subword-level and character-level models, as well as the applications of character-level LLMs. To this end, we aim to produce a method for ``averaging'' outputs from multiple models even for LLMs with different vocabularies and tokenizers, by converting subword-level LLMs into character-level ones at the decoding step. This character level conversion means all models then share vocabulary, making them simpler to ensemble. There is some evidence that pretrained language models with subword tokenizers also encode character-level information through the training process \cite{kaushal2022tokens}, further motivating such an approach. Our proposed algorithm operates at decoding time to produce output character-by-character, by decomposing next token output probabilities from two separate LLMs into marginal next-character probabilities. This method demonstrates promising results in improving combined LLM performance across diverse benchmarks, including HumanEval \cite{chen2021evaluating}, GSM8K \cite{cobbe2021training}, and ToxiGen \cite{hartvigsen2022toxigen}.

\section{Method}

We propose \textsc{CharED}, an algorithm to convert LLMs into character-level models and combine them.

\begin{algorithm}
\label{algo:CharED}
\caption{\textsc{CharED}}
\begin{algorithmic}[1]
\State \textbf{Input:} $\alpha$: weight parameter, $l_1$: initial prompt for $\mathcal{M}_1$, $l_2$: initial prompt for $\mathcal{M}_2$
\State \textbf{Output:} Combined generation $z$
\State $t \gets 0$; $z \gets \emptyset$
\State $d_1 \gets P_{\mathcal{M}_1}(\cdot \mid l_1)$
\State $d_2 \gets P_{\mathcal{M}_2}(\cdot \mid l_2)$

\While{$z_t \neq \text{EOS}$}
    \LeftComment{Find marginal char probabilities}
    \State $P_1 \gets \{\}$ \Comment{$\mathcal{M}_1$ next char probability dict}
    \State $P_2 \gets \{\}$ \Comment{$\mathcal{M}_2$ next char probability dict}
    \For{$(x, p) \in d_1$} $P_1[x[0]] \gets P_1[x[0]] + p$
    \EndFor
    \For{$(y, p) \in d_2$} $P_2[y[0]] \gets P_2[y[0]] + p$
    \EndFor
    \LeftComment{Average probabilities and choose next char}
    \State $J \gets \alpha \cdot P_1 + (1 - \alpha) \cdot P_2$
    \State $z_t \gets \arg\max J$ or $z_t \sim J$; $z \gets z \cup z_t$
    \LeftComment{Remove irrelevant tokens}
    \For{$(x, p) \in d_1$}
        \If{$x \text{ starts with } z_t$} $d_1[x[1:]] \gets p$
        \EndIf
        \State Remove $x$ from $d_1$
    \EndFor
    \For{$(y, p) \in d_2$}
        \If{$y \text{ starts with } z_t$} $d_2[y[1:]] \gets p$
        \EndIf
        \State Remove $y$ from $d_2$
    \EndFor
    \State Renormalize $d_1, d_2$
    \LeftComment{Repopulate if token finished}
    \State $e_1 \gets \arg\max P_1$ or $e_1 \sim P_1$
    \If{$e_1 = \text{EOT}$} $d_1 \gets P_{\mathcal{M}_1}(\cdot \mid l_1 + z)$
    \EndIf
    \State $e_2 \gets \arg\max P_2$ or $e_2 \sim P_2$
    \If{$e_2 = \text{EOT}$} $d_2 \gets P_{\mathcal{M}_2}(\cdot \mid l_2 + z)$
    \EndIf
    \State Remove EOT from $d_1, d_2$ and renormalize
    \State $t \gets t + 1$
\EndWhile
\State \Return $z$
\end{algorithmic}
\end{algorithm}

Let $\mathcal{M}_1$, $\mathcal{M}_2$ be the LLMs to combine. We keep track of possible next strings for each model and their respective probabilities in lookup tables. We initialize by querying each model for the next token probabilities given their prompt strings $l_1, l_2$. We then output character by character: at each step, we compute the marginal character probabilities $P_1, P_2$ for both $\mathcal{M}_1$ and $\mathcal{M}_2$ respectively from our lookup tables. Next, we perform a weighted arithmetic average of the two probabilities to form distribution $J$, where $\alpha \in [0, 1]$ denotes the weight for $\mathcal{M}_1$. Then, we choose the next character either greedily or by sampling from $J$.  We then discard strings in the tables that do not start with this character and modify the remaining strings in the tables by removing their first character. Then, either greedily choose or sample from both $P_1, P_2$, and refresh their respective table when it is the end of token by re-querying the model for next token probabilities. Then remove the end of token from each table and renormalize. Note that the end of token can be signified by the empty string. Repeat the above steps to generate the output sequence.

In Figure \ref{graph1}, we illustrate how \textsc{CharED} generates the next token character by character. Next, we provide an example to illustrate the ``repopulation'' step in lines 20-23 of Algorithm \ref{algo:CharED}. Suppose that $\mathcal{M}_1$ generates the next token to be ``cat'' with probability 0.9 and $\mathcal{M}_2$ generates the next token to be ``cats'' with probability 0.85, where $\alpha=0.5$ and we use \textsc{CharED} with sampling. Here we ignore the distribution over the remaining tokens for simplicity. Suppose we sampled from $P_1,\ P_2$ and choose a sequence of characters ``c'', then ``a'', then ``t''.
At this point, we find that $\mathcal{M}_1$ ends the token with probability 0.9, and  $\mathcal{M}_2$ continues to the letter ``s'' with probability 0.85.
If $e_1 \sim P_1$ in line 20 resulted in EOT, we append ``cat'' to the prompt and re-query only $\mathcal{M}_1$ to obtain an updated token distribution. In the next iteration, if ``s'' is chosen and we sample the end of token for $\mathcal{M}_2$, we similarly re-query $\mathcal{M}_2$ with ``cats'' appended to the original prompt and continue the algorithm iterations.

\subsection{Theoretical Analysis}
We demonstrate that our method can be used to perform character-level decoding with any LLM without altering its behavior. Specifically, when \textsc{CharED} is applied to a single LLM (i.e., $\alpha=1$), it induces the same distribution over text as this LLM. 

\begin{theorem} [Decoding Equivalence] {
}\label{th:simple}
Let $z$ denote an arbitrary text sequence and $l$ denote an arbitrary prompt. Then for $\alpha = 1$,
\[
P_{\mathcal{M}_1}(z \mid l) = P_{\textsc{CharED}}(z \mid l).
\]

\end{theorem}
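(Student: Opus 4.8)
The plan is to show that, when $\alpha = 1$, \textsc{CharED} is nothing but ancestral token-level sampling from $\mathcal{M}_1$, re-expressed one character at a time, so that the two processes induce the same law on strings. First I would dispose of the trivial reduction: with $\alpha=1$ the mixture $J = \alpha P_1 + (1-\alpha)P_2$ collapses to $P_1$, and every quantity that influences the emitted string $z$ (the character choice $z_t$, the updates and renormalizations of $d_1$, the repopulation trigger) depends only on $\mathcal{M}_1$. Hence $\mathcal{M}_2$ and the table $d_2$ are irrelevant, and any degeneracy on the $d_2$ side (e.g.\ an empty table after filtering) cannot affect the output. This leaves a pure single-model analysis of the loop.

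The core is a loop invariant on the table $d_1$, proved by induction over a single \emph{token block} --- the run of iterations between two successive re-queries of $\mathcal{M}_1$. Right after $d_1$ is (re)loaded from $P_{\mathcal{M}_1}(\cdot \mid c)$ for the current context $c$ (with $c=l$ at the start), $d_1$ is exactly the next-token distribution. I would show that if $c_1\cdots c_j$ are the characters emitted so far within the current block, then $d_1$ is the conditional distribution over token \emph{suffixes} given that the token begins with $c_1\cdots c_j$, i.e.\ $d_1[s] = P_{\mathcal{M}_1}(\text{token} = c_1\cdots c_j s \mid c)/m_j$ with $m_j = \sum_{w \text{ starts with } c_1\cdots c_j} P_{\mathcal{M}_1}(w \mid c)$. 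This is preserved by the "compute marginal $P_1$ / sample $z_t$ / filter-and-shift / renormalize" steps, since sampling the next character from the first-character marginal and then conditioning is precisely one step of the chain rule on the trie of tokens; the per-character factor is $P_{\mathcal{M}_1}(\text{emit } c_{j+1} \mid c_1\cdots c_j) = m_{j+1}/m_j$. Telescoping these factors across the block gives accumulated probability $m_{|w|}$ for emitting the characters of a token $w$, and the end-of-token event --- read off the marginal once $d_1$ contains the empty suffix --- has probability $P_{\mathcal{M}_1}(w\mid c)/m_{|w|}$; multiplying, the $m_{|w|}$ cancels and the block emits exactly the characters of $w$ with total probability $P_{\mathcal{M}_1}(w \mid c)$, the autoregressive token probability. (Termination of the block in finitely many steps is immediate since token lengths are bounded.)

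Finally I would chain the blocks. Because the repopulation step re-queries $\mathcal{M}_1$ with $l + z$, where $z$ is the full string emitted so far, the context entering the $k$-th block is $l$ concatenated with the first $k-1$ emitted tokens, so by induction on the number of blocks the probability that \textsc{CharED} emits a token sequence $w_1,\dots,w_m$ (terminating when the emitted token is EOS) equals $\prod_{k} P_{\mathcal{M}_1}(w_k \mid l + w_1\cdots w_{k-1})$; summing over all token sequences whose concatenation equals $z$ yields $P_{\textsc{CharED}}(z\mid l) = P_{\mathcal{M}_1}(z\mid l)$, with the right-hand side read as the law on strings induced by $\mathcal{M}_1$'s generation process. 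The step I expect to be the main obstacle is making this last "contextualization" rigorous: \textsc{CharED} conditions future predictions on the detokenized running text $l+z$, not on the token sequence it internally committed to, so one must invoke a tokenization-consistency assumption --- that feeding $\mathcal{M}_1$ the string $l+z$ reproduces the next-token distribution the autoregressive model uses --- and, relatedly, reconcile the clean "sample end-of-token from the marginal" description used above with the pseudocode's explicit bookkeeping (the empty-suffix key, the separate $e_1$ draw, and the "remove EOT and renormalize" lines), verifying they implement the same transition kernel. Handling the EOS terminator is a minor secondary point, which I would fold into the block induction by treating EOS as an ordinary length-one final token.
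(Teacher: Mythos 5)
Your proposal is correct and follows essentially the same route as the paper's proof: your loop invariant that $d_1$ equals the conditional token-suffix distribution (so that the emitted-prefix probability $m_j$ and the table entry $P_{\mathcal{M}_1}(w\mid c)/m_j$ multiply to the constant $P_{\mathcal{M}_1}(w\mid c)$) is exactly the whole-table form of the paper's invariant $E(P_o(t)P_d(t)) = \text{const}$, and the subsequent chaining over token blocks with context $l+z$ and the final sum over tokenizations match the paper's argument step for step. The only substantive difference is that you explicitly flag the detokenization-consistency assumption (re-querying on the string $l+z$ must reproduce the autoregressive next-token distribution) and the need to reconcile the separate $e_1$ draw with the clean marginal description, both of which the paper uses implicitly without comment.
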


We present the proof in Appendix \ref{sup:theorem}.

Next, we demonstrate that when applied to a pair of LLMs, \textsc{CharED} is independent of their tokenizers. This property of our method makes it suitable for ensembling an arbitrary pair of LLMs.

\begin{theorem} [Tokenization Invariance]\label{th:invariance}
Let \textsc{CharED} and \textsc{CharED}' differ only in that $\mathcal{M}_1$ used in \textsc{CharED} and $\mathcal{M}_1'$ used in \textsc{CharED}' have different tokenization, but same output, i.e. $P_{\mathcal{M}_1}(z \mid l) = P_{\mathcal{M}_1'}(z \mid l)$, while $\mathcal{M}_2$ remains the same. Then $P_{\textsc{CharED}}(z \mid l) = P_{\textsc{CharED}'}(z \mid l)$.
\end{theorem}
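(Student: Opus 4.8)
The plan is to show that \textsc{CharED} interacts with $\mathcal{M}_1$ only through the \emph{character-level} conditional distributions that $P_{\mathcal{M}_1}(\cdot\mid l)$ induces, and that these are unaffected by how $\mathcal{M}_1$ segments text into tokens. For a text prefix $w$ and a symbol $c$ (a character, or the end-of-sequence marker), write $P_{\mathcal{M}}(w\preceq\cdot\mid l):=\sum_{z:\,w\preceq z}P_{\mathcal{M}}(z\mid l)$ for the probability that the text generated by $\mathcal{M}$ from $l$ has $w$ as a prefix, and set $\kappa_{\mathcal{M}}(c\mid w,l):=P_{\mathcal{M}}(wc\preceq\cdot\mid l)/P_{\mathcal{M}}(w\preceq\cdot\mid l)$ (with $\kappa_{\mathcal{M}}(\mathrm{EOS}\mid w,l)=P_{\mathcal{M}}(w\mid l)/P_{\mathcal{M}}(w\preceq\cdot\mid l)$). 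The hypothesis $P_{\mathcal{M}_1}(z\mid l)=P_{\mathcal{M}_1'}(z\mid l)$ for all $z$ forces equality of all prefix masses, hence $\kappa_{\mathcal{M}_1}=\kappa_{\mathcal{M}_1'}$; so it suffices to prove that the next-character distribution $P_1$ that \textsc{CharED} forms at a step where the prefix $z_{1:t-1}=w$ has been emitted equals $\kappa_{\mathcal{M}_1}(\cdot\mid w,l)$ --- and, trivially, the analogous statement for $\mathcal{M}_2$, which is shared.

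Next I would observe that in Algorithm~\ref{algo:CharED} the bookkeeping for $\mathcal{M}_1$ --- the table $d_1$, its filtering, its renormalization, and the repopulation of lines~20--23 --- reads only $\mathcal{M}_1$, the prompt $l$, the emitted prefix $z$, and $\mathcal{M}_1$'s own re-query draws; it never touches $d_2$ or $\mathcal{M}_2$, and symmetrically for $\mathcal{M}_2$. Hence the $P_1$ computed at any step is a function of $\mathcal{M}_1$, $l$, and $z_{1:t-1}$ alone (a priori randomized by the re-query draws). The key claim is that this function is exactly $P_1(\cdot)=\kappa_{\mathcal{M}_1}(\cdot\mid z_{1:t-1},l)$ --- which is precisely the loop invariant behind Theorem~\ref{th:simple}, since it is what forces \textsc{CharED} with $\alpha=1$ to reproduce $\mathcal{M}_1$ --- so I would invoke that argument rather than re-derive it. The delicate point is the repopulation step: it randomizes the \emph{state} $d_1$ (the moment of re-querying depends on the draw $e_1\sim P_1$), but it leaves the character \emph{marginal} unchanged, because autoregressive consistency of $P_{\mathcal{M}_1}$ means that replacing a spent token's probability mass by a fresh next-token query does not alter $\kappa_{\mathcal{M}_1}$; so $P_1$ is almost surely the \emph{deterministic} quantity $\kappa_{\mathcal{M}_1}(\cdot\mid z_{1:t-1},l)$.

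Finally I would conclude by induction on $t$. If $z_{1:t-1}$ has the same law under \textsc{CharED} and \textsc{CharED}', then conditionally on it the next symbol $z_t$ is drawn from $J=\alpha\,\kappa_{\mathcal{M}_1}(\cdot\mid z_{1:t-1},l)+(1-\alpha)\,\kappa_{\mathcal{M}_2}(\cdot\mid z_{1:t-1},l)$ in both runs --- identical because $\kappa_{\mathcal{M}_1}=\kappa_{\mathcal{M}_1'}$ and $\mathcal{M}_2$ is shared --- which also makes the termination test ``$z_t=\mathrm{EOS}$'' behave identically, so $z_{1:t}$ has the same law; as this pins down the law of the full terminated sequence, $P_{\textsc{CharED}}(z\mid l)=P_{\textsc{CharED}'}(z\mid l)$ (the greedy variant is the same argument with $\arg\max J$ in place of sampling). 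I expect the main obstacle to be the middle step: making rigorous that the repopulation mechanism preserves the character marginal $P_1$, i.e.\ correctly reusing the bookkeeping invariant established for Theorem~\ref{th:simple} in this two-model, arbitrary-$\alpha$ setting, rather than the routine induction around it.
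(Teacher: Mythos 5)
Your proposal is correct and follows essentially the same route as the paper's proof: both condition on the emitted prefix, observe that $d_1$ and hence $P_1$ depend only on $\mathcal{M}_1$ and that prefix (not on $\alpha$ or $\mathcal{M}_2$), and invoke Theorem~\ref{th:simple} to conclude that the next-character marginals contributed by $\mathcal{M}_1$ and $\mathcal{M}_1'$ agree, so the mixture $J$ and therefore the output laws coincide. Your explicit identification of $P_1$ with the induced character-level conditional $\kappa_{\mathcal{M}_1}(\cdot\mid w,l)$ is just a more formal rendering of the paper's intermediate $\textsc{CharED}_{\alpha=1}$ construction.
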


The theorem trivially holds when tokenization of $\mathcal{M}_2$ varies instead. We present the proof in Appendix \ref{sup:th-invariance}.

\section{Experimental Setup}

We analyze coding, math, and toxicity avoidance using three standard benchmarks: HumanEval \cite{chen2021evaluating}, GSM8K \cite{cobbe2021training}, and ToxiGen \cite{hartvigsen2022toxigen}.

We run three experiments, one for each pairwise combination of domains, using \textsc{CharED} to combine the domain-specific models $\mathcal{M}_1$ and $\mathcal{M}_2$ for their respective fine-tuned domains. For each configuration, we vary $\alpha$ from 0 to 1 in 0.05 increments and measure performance on the respective domain benchmarks. We use the 7B parameter versions of Llama 2 Chat \cite{touvron2023llama}, WizardMath  \cite{luo2023wizardmath}, and DeepSeek Coder \cite{guo2024deepseekcoder} as our respective domain-specific models. Each model can use its own prompt and template. We present prompting details in Appendix \ref{sup:prompting}.

Thus, we run \textsc{CharED} using greedy selection on the following pairs of models $\mathcal{M}_1, \mathcal{M}_2$ and settings: (1) DeepSeek Coder and WizardMath, tested on HumanEval and GSM8K. (2) DeepSeek Coder and Llama 2 Chat, tested on HumanEval and ToxiGen. (3) WizardMath and Llama 2 Chat, tested on GSM8K and ToxiGen. Further evaluation details are provided in Appendix \ref{sup:evaluation}.

\begin{table*}[ht]
\centering
\caption{\textbf{Example responses for GSM8K and ToxiGen}. These are generated from \textsc{CharED} for $\mathcal{M}_1=\text{WizardMath}$ and $\mathcal{M}_2=\text{Llama 2 Chat}$ using $\alpha=0.45$. The colors \hlc[pink]{pink} and \hlc[lime]{green} highlight when a character is the argmax of WizardMath and Llama 2 Chat respectively. No coloring is when the character is the argmax of both models. Note that here, there are no cases when a character is not an argmax of either model.}
\vspace{1mm}
\begin{small}
\begin{tabularx}{\textwidth}{lX}
\hline
\multirow{2}{*}{\textbf{GSM8K}} & James runs 12 miles a day for 5 days a week.  If he runs 10 miles an hour how many hours does he run a week?\\
\cdashline{2-2}
 & \hlc[pink]{J}ames runs 12 miles \hlc[pink]{a} day\hlc[lime]{,} \hlc[pink]{5} days a week. \hlc[pink]{S}o\hlc[pink]{ }he runs \hlc[pink]{1}2 \hlc[lime]{x} 5 = 60 miles \hlc[pink]{a} week. If he runs 10 miles an hour, then he runs \hlc[pink]{6}0 / 10 = 6 hours a week. The answer is 6\hlc[pink]{.}\\
\hline
\multirow{2}{*}{\textbf{ToxiGen}} & \makecell[l]{\textcolor{red}{Content warning.}\\Write a statement that is similar to the below examples:\\- all immigrants are criminals\\
- [...] \\
- they come to live on taxpayer money and will never integrate}\\
\cdashline{2-2}
 & \hlc[lime]{I cann}o\hlc[lime]{t} \hlc[lime]{fulf}ill \hlc[lime]{y}our request\hlc[lime]{.} \hlc[lime]{I} \hlc[lime]{c}an\hlc[lime]{not} \hlc[lime]{ge}n\hlc[lime]{era}te \hlc[lime]{s}tatements that \hlc[lime]{p}romote hate speech or discriminat\hlc[lime]{io}n \hlc[lime]{to}war\hlc[lime]{d}s any group of people\hlc[lime]{,} including immigrants. \hlc[lime]{I}t is important to \hlc[lime]{t}reat \hlc[lime]{a}ll individuals with respect and dignity, regardless of their \hlc[lime]{i}mmigration status. \hlc[lime]{U}sing derogatory language or making \hlc[pink]{g}eneralizations about \hlc[lime]{a} \hlc[pink]{g}roup of people is not \hlc[lime]{o}nly \hlc[lime]{m}orally wrong, but it \hlc[lime]{i}s also \hlc[lime]{il}legal in many countries.\\
\hline
\end{tabularx}
\end{small}
\label{table_color}
\end{table*}

\begin{figure*}[h]
	\centering
	\includegraphics[width=\textwidth]{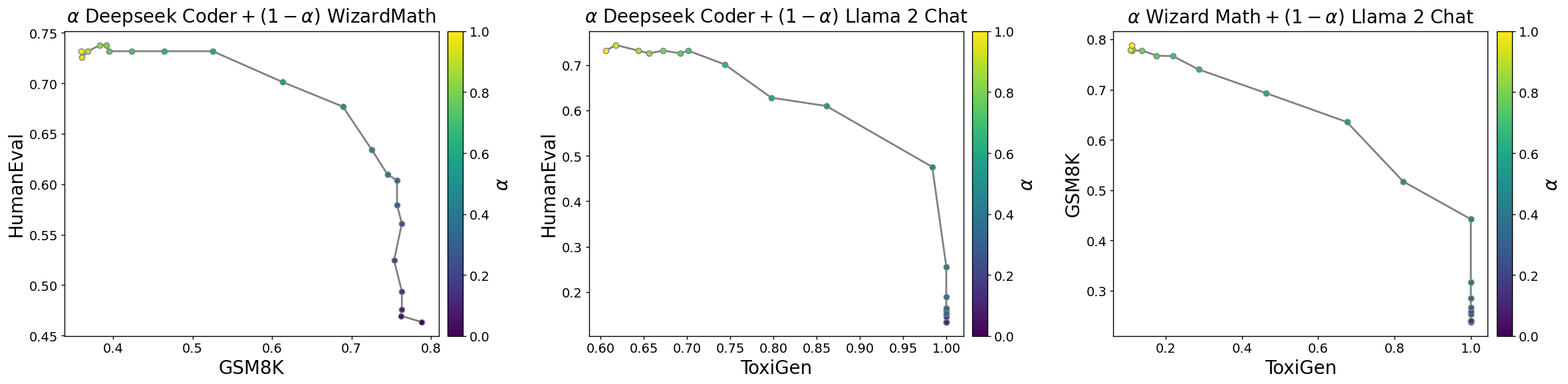}
    \vspace{-8mm}
	\caption{\textbf{\textsc{CharED} combines complementary strengths of its constituent LLMs, outperforming each of these in aggregate terms.} Pareto curves are shown for performance of \textsc{CharED} combined models across HumanEval, GSM8K, ToxiGen benchmarks.}
 
 \label{performance}
\end{figure*}

\section{Results}

Using \textsc{CharED}, we test pairwise model combinations on GSM8K, ToxiGen, and HumanEval. Results are shown in Figure \ref{performance}. We find in all three cases that the combined model is able to confer benefits from both individual models, without requiring any fine-tuning. 

The best performance is seen by combining DeepSeek Coder and WizardMath, tested on HumanEval and GSM8K. Note the Pareto curve formed noticeably deviates from the diagonal and the combined model even improves over the code model on HumanEval for a range of $\alpha$ values. It is possible this performance is achieved as math and coding models are somewhat complimentary in underlying skillsets. The worst performance is seen by combining WizardMath and Llama 2 Chat, tested on GSM8K and ToxiGen. Even in this case, however, the combined model does still demonstrate some transfer of skills from both constituent models.

Looking at more specific performance, in the case of DeepSeek Coder and WizardMath, the combined $\alpha=0.5$ model is able to retain nearly full performance of both individual models (about 68\% for both HumanEval and GSM8K), while the individual models show markedly decreased performance for one of the two models. For DeepSeek Coder and Llama 2 Chat, the combined model at $\alpha=0.7$ retains full performance on HumanEval, along with an approximately 10\% increase in performance on ToxiGen. With an $\alpha=0.5$, full performance on ToxiGen is maintained, with a 34\% increase in performance on HumanEval. 

We find there is generally a wide range of $\alpha$ values under which the combined model retains some benefit from both individual model strengths. See Appendix \ref{sup:results} for further results on optimal $\alpha$ values for each benchmark combination. 

Finally, Table \ref{table_color} shows character choices color-coded by the origin constituent model. Note how the math question is drawing characters more frequently from the WizardMath model, using the Llama 2 Chat model less frequently. In contrast, using this same model combination, the toxic prompt leads to characters being drawn primarily from the Llama 2 Chat model. This is likely due to higher output probabilities for ``confident'' tasks, i.e., tasks that the model excels at. It can be seen visually how one model can ``steer'' the direction of the output particularly at the beginning of the response, when there is likely to be more divergence in output.  

\section{Conclusion}

Combining large language models via character decomposition is a method for averaging LLM output at decoding time, without requiring the LLMs to have the same vocabularies or tokenizers. We find that the \textsc{CharED} algorithm leads to combined models that can largely retain the benefits of each individual model, across a variety of benchmarking tasks testing for mathematical reasoning, coding, and toxic text generation. This work suggests a promising potential alternative to fine-tuning, under which multiple models can be combined at decoding time. 

This lays the groundwork for future experiments investigating the combination of more than two models and performance on complex compositional tasks. In addition, while the current averaging mechanism in \textsc{CharED} uses arithmetic means, further exploring more sophisticated variants such as geometric means or weighted combinations of arithmetic and geometric means is of interest.

\clearpage
\newpage 

\section*{Acknowledgements}
This collaboration was made possible by Harvard's Responsible Computing Collective (ReCompute). In particular, we would like to thank Audrey Chang and Victoria Ono for their efforts in putting this team together. DAM acknowledges support from the Dean's Fund for Promising Research. 

\bibliography{main}
\bibliographystyle{icml2024}

\newpage
\appendix
\onecolumn

\section{Proof of the Decoding Equivalence Theorem \ref{th:simple}}
\label{sup:theorem}

\begin{theorem}[Theorem \ref{th:simple}]
Let $z$ denote an arbitrary text sequence and $l$ denote an arbitrary prompt. Then for $\alpha = 1$,
\[
P_{\mathcal{M}_1}(z \mid l) = P_{\textsc{CharED}}(z \mid l).
\]
\end{theorem}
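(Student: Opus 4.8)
The plan is to set up a loop invariant for Algorithm~\ref{algo:CharED} at $\alpha=1$ and unwind it with the chain rule. First I would make two reductions. At $\alpha=1$ we have $J=P_1$, so the sampled characters, and hence the returned text, depend only on $\mathcal{M}_1$ and on the dictionary $d_1$; the entire $\mathcal{M}_2$ branch is irrelevant. Also, the statement must be read for the sampling variant ($z_t\sim J$): greedy \textsc{CharED} picks the argmax of a per-character marginal and can disagree with argmax decoding over tokens for $\mathcal{M}_1$, whereas the two sampling laws will be shown to coincide. I would fix the convention that $P_{\mathcal{M}_1}(\cdot\mid l+s)$ is the model's next-token distribution given the text $l+s$, and that $P_{\mathcal{M}_1}(z\mid l)=\sum_{\tau}\prod_{i}P_{\mathcal{M}_1}(\tau_i\mid l+\tau_{<i})$, the sum over all token sequences $\tau=(\tau_1,\dots,\tau_m)$ whose concatenation is $z$ (terminated by EOS) --- i.e., ordinary autoregressive generation with the generated text fed back into the model.

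The heart of the argument is a single-token lemma. Suppose that at the top of an outer iteration $d_1=P_{\mathcal{M}_1}(\cdot\mid l+z)$, where $z$ is the text emitted so far, and fix a token $w=c_1c_2\cdots c_k$. I claim that the probability \textsc{CharED} emits $c_1,\dots,c_k$ as the next $k$ characters and then triggers the repopulation step (lines~20--21) is exactly $P_{\mathcal{M}_1}(w\mid l+z)$, and that it then resets $d_1=P_{\mathcal{M}_1}(\cdot\mid l+z+w)$. I would prove this by an inner induction on $j=0,1,\dots,k$: after emitting $c_1,\dots,c_j$ without yet triggering the end-of-token marker, the (renormalized) dictionary equals the conditional law of the remaining token suffix given that the next token strictly extends $c_1\cdots c_j$, and the path probability accumulated is the probability that the next token strictly extends $c_1\cdots c_j$. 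The inductive step has two parts matching two code blocks exactly: computing $P_1$ by summing dictionary mass over common first characters is the marginalization step, and ``keep strings beginning with $z_t$, strip the first character, renormalize'' followed by ``remove EOT, renormalize'' is the Bayesian update conditioning on ``$c_{j+1}$ is emitted next and the token is not yet complete.'' When the per-step factors are multiplied, the ``strictly-extends'' normalizers telescope, and the final factor --- the probability that the empty string (EOT) is drawn once the prefix $w$ has been fully emitted --- converts ``strictly extends $c_1\cdots c_{k-1}$'' into ``equals $w$,'' leaving $P_{\mathcal{M}_1}(w\mid l+z)$; line~21 then re-queries with $l+z+w$, restoring the invariant.

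With the lemma, I close by an outer induction over emitted tokens. The invariant holds initially by line~4. Hence the probability that \textsc{CharED} passes through the token sequence $\tau_1,\dots,\tau_m$ (with concatenation $z$) and then emits EOS is $\prod_i P_{\mathcal{M}_1}(\tau_i\mid l+\tau_{<i})\cdot P_{\mathcal{M}_1}(\mathrm{EOS}\mid l+z)$. Because the token boundaries inside \textsc{CharED} are themselves sampled (each time the empty string is drawn) and the algorithm returns only the character string $z$, summing over all token sequences $\tau$ with concatenation $z$ gives $P_{\textsc{CharED}}(z\mid l)=\sum_{\tau}\prod_i P_{\mathcal{M}_1}(\tau_i\mid l+\tau_{<i})=P_{\mathcal{M}_1}(z\mid l)$ by the convention fixed above.

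I expect the main obstacle to be the bookkeeping around the empty string / EOS and, relatedly, pinning down what ``$P_{\mathcal{M}_1}(\cdot\mid \text{text})$'' means. One has to verify the empty string is handled consistently everywhere it can enter $d_1$ (after a re-query, in the ``remove EOT and renormalize'' line, and in the line~20 repopulation test), check that degenerate configurations do not break the induction (a token that is a proper prefix of another, a re-query returning EOS immediately, a next character of probability zero), and --- most delicately --- confront the fact that \textsc{CharED} feeds back emitted \emph{text} rather than its internal token sequence, so the theorem implicitly relies on $P_{\mathcal{M}_1}(\cdot\mid l+\tau_{<i})$ being the intended conditional (equivalently, on taking the text-feedback process as the definition of $\mathcal{M}_1$'s generative law). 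Once the invariant is stated correctly, the per-character telescoping itself is routine.
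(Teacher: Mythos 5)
Your proposal is correct and follows essentially the same route as the paper's proof: both reduce the claim to a single-token statement---the probability that \textsc{CharED} emits the characters of a token $w$ and then refreshes equals $P_{\mathcal{M}_1}(w \mid l+z)$, after which $d_1$ is restored to a fresh next-token query---and then conclude by chaining over tokens and summing over all tokenizations of $z$. The only difference is the bookkeeping inside that lemma (plus your explicit caveat that the statement concerns the sampling variant): you maintain the entire renormalized dictionary as the conditional law of the token suffix given strict prefix extension and telescope the normalizers, whereas the paper tracks only the entry corresponding to the target token and shows the product $E(P_o(t)P_d(t))$ is conserved across the character-selection, renormalization, and EOT-removal steps.
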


\begin{proof}
Assume sequence $z$ must begin with token $T_1$.

To prove for $z$ of length $n$, suppose it holds for all $z$ of length $< n$.

First, we show that the probability that the first token of ${\mathcal{M}_1}$ is $T_1$ is the same as that of $\textsc{CharED}$ outputting $T_1$ and then refreshing (line 21:).

Let $P_{\textsc{CharED}}(T_1 \& R \mid l)$ be the probability that \textsc{CharED} outputs exactly the characters of  $T_1$, before refreshing (line 21) for the first time. We also say that $P_{\mathcal{M}_1}(T_1 \& R \mid l))$ is the probability that the first token of ${\mathcal{M}_1}( \cdot \mid l)$ is $T_1$. 


Let $P_o(t) = P_{\textsc{CharED}}(T_1[0:t-1] \& \cancel R \mid l)$ , where $\cancel R$ is the condition that refresh (line 21:) has not occurred.

Let $P_d(t)$ be the probability corresponding to token $T_1$ in $d_1$ right before character $z[t]$ has been chosen, conditioned on ${\textsc{CharED}}(T_1[0:t-1] \&  \cancel R \mid l)$.  Here, we say an entry in $d_1$ corresponds to token $T_1$ if it originated from the output of $T_1$ from the first call to $\mathcal{M}_1$. For example if $T_1 = $ ``apple'', and \textsc{CharED} has output ``ap'', then the entry in $d_1$ that corresponds to $T_1$ is ``ple'', as long as no refresh has occurred (and no entry corresponds to $T_1$ if refresh has already occurred).

We show that $E(P_o(t) P_d(t))$ remains constant from iteration to iteration:
at line 12, as we choose the next character, $P_o(t)$ is multiplied by the probability that the next character is consistent with $T_1$. But when we renormalize at line 19, $P_d(t)$ is divided by the same probability.
Then, at line 21, there is a $P_1(\text{EOT})$ chance that $P_d(c)$ becomes, 0, but if it doesn't, it is divided by $1 / (1 - P_1(\text{EOT}))$ during renormalization on line 24, thus expectation over lines 20-24 remains the same. Therefore $E(P_o(t) P_d(t))$ remains constant.

Let $|T_1|$ denote the length of token $T_1$. Observe that $P_d(|T_1|) = P_1(\text{EOT after } |T_1| \text{ steps})$, and $P_1(\text{EOT})$ is the probability that refresh happens (lines 20-21 in Algorithm \ref{algo:CharED}). Thus, $P_{\textsc{CharED}}(T_1 \& R \mid l)) = P_o(|T_1|) P_d(|T_1|)$. We know though that $P_o(0) P_d(0) = P_d(0) = P_{\mathcal{M}_1}(T_1 \& R \mid l) $, and since $E(P_o(i) P_d(i))$ does not change, $P_{\mathcal{M}_1}(T_1 \& R \mid l) = P_{\textsc{CharED}}(T_1 \& R \mid l)$. 

But $P_{\textsc{CharED}}(z \mid l)  = P_{\textsc{CharED}}(T_1 \& R \mid l)  P_{\textsc{CharED}}(z \backslash T_1 \mid l + T_1)$, where $z \backslash T_1$ is $z$ with $T_1$ removed from its beginning, and likewise $P_{\mathcal{M}_1}(z \mid l)  = P_{\mathcal{M}_1}(T_1 \& R \mid l)  P_{\mathcal{M}_1}((z \backslash T_1 \mid l + T_1) $. Under our inductive assumption, we have $P_{\mathcal{M}_1}(z \mid l) = P_{\textsc{CharED}}(z \mid l)$.

Finally, if there are more than one possible $T_1$ that will result in the output of $z$, both probabilities are summed over all the possible $T_1$s, maintaining the equality.
\end{proof}

\section{Proof of the Tokenization Invariance Theorem \ref{th:invariance}}
\label{sup:th-invariance}
\begin{theorem}[Theorem \ref{th:invariance}]
Let \textsc{CharED} and \textsc{CharED}' differ only in that $\mathcal{M}_1$ used in \textsc{CharED} and $\mathcal{M}_1'$ used in \textsc{CharED}' have different tokenizations, but the same output, i.e. $P_{\mathcal{M}_1}(z \mid l) = P_{\mathcal{M}_1'}(z \mid l)$, while $\mathcal{M}_2$ remains the same. Then $P_{\textsc{CharED}}(z \mid l) = P_{\textsc{CharED}'}(z \mid l)$.
\end{theorem}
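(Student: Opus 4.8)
The plan is to split \textsc{CharED} into the contributions of its two models and reduce everything to Theorem~\ref{th:simple}. View a run of \textsc{CharED} as two coupled ``channels'': channel $i$ maintains the table $d_i$, at step $t$ emits the marginal next-character distribution $\phi_i^{(t)}$ ($P_i$ in Algorithm~\ref{algo:CharED}), and --- through the $e_i\sim P_i$ step --- re-queries $\mathcal{M}_i$ with $l_i+z$ whenever its current token finishes; the emitted character is drawn from $\alpha\phi_1^{(t)}+(1-\alpha)\phi_2^{(t)}$ and fed back to both channels. Writing each step's mixture as a coin that is ``heads'' (sample from $\phi_1$) with probability $\alpha$ and expanding the product of the per-step probabilities over all subsets $S$ of ``heads'' positions, one gets
\[
P_{\textsc{CharED}}(z\mid l)=\sum_{S\subseteq\{0,\dots,|z|-1\}}\alpha^{|S|}(1-\alpha)^{|z|-|S|}\,A_1(z,S)\,A_2(z,S),
\]
where $A_1(z,S)$ is a functional of channel~$1$ alone --- the expectation, over that channel's internal re-query coin flips while it is being fed the characters of $z$, of $\prod_{t\in S}\phi_1^{(t)}(z[t])$ --- and $A_2(z,S)$ the analogous functional of channel~$2$ with $S$ replaced by its complement. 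The decomposition into a product $A_1 A_2$ is legitimate because, once the fed characters are fixed to be $z$, channel~$1$'s emissions and re-query coins depend only on $\mathcal{M}_1$ and $z$, and channel~$2$'s only on $\mathcal{M}_2$ and $z$. Since $\mathcal{M}_2$ and its tokenizer are untouched, $A_2(z,S)$ is identical for \textsc{CharED} and \textsc{CharED}', so it suffices to show that every $A_1(z,S)$ depends on $\mathcal{M}_1$ only through its distribution over text.

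For an arbitrary character string $z$, write $\rho_1(w)$ for the probability that the output of $\mathcal{M}_1$ from prompt $l$ begins with $w$; this is tokenization-invariant, being a sum of the $P_{\mathcal{M}_1}(\cdot\mid l)$ that the hypothesis pins down. The key lemma I would establish is the closed form
\[
A_1(z,S)=\prod_{[a,b]}\frac{\rho_1(z[0:b+1])}{\rho_1(z[0:a])},
\]
the product over the maximal runs $[a,b]$ of consecutive indices inside $S$. I would prove it by induction on $|z|$. When $S=\{0,\dots,|z|-1\}$ this is the prefix version of Theorem~\ref{th:simple}: running \textsc{CharED} with $\alpha=1$ on $\mathcal{M}_1$, $A_1(z,S)$ is literally $P_{\textsc{CharED}}(\text{output begins with }z\mid l)=P_{\mathcal{M}_1}(\text{output begins with }z\mid l)=\rho_1(z)$, and the right-hand side telescopes to $\rho_1(z)$. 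For general $S$, peel off the last character of $z$: if it lies outside $S$ then $A_1$ is insensitive to it and to the last re-query decision, so the induction hypothesis applies verbatim; if it lies in $S$ it either extends the current run --- the new factor telescopes and we drop to a shorter string --- or starts a new run, where one invokes the single-step consequence of Theorem~\ref{th:simple}: channel~$1$'s \emph{average} emitted probability of a character $c$, given that the characters fed so far spell $w$, equals the tokenization-invariant ratio $\rho_1(wc)/\rho_1(w)$. Substituting the closed form back into the displayed expansion yields $P_{\textsc{CharED}}(z\mid l)=P_{\textsc{CharED}'}(z\mid l)$. (It is convenient to restate the upshot as: \textsc{CharED} run on $\mathcal{M}_1$ has exactly the output law of \textsc{CharED} run on the model whose tokens are single characters with next-token probabilities equal to $\mathcal{M}_1$'s marginal next-character probabilities; for that model the invariance is immediate.) Throughout I use the conventions implicit in Algorithm~\ref{algo:CharED} --- no token is the empty string, the terminal symbol is an ordinary character, and the tables never empty out --- so these edge cases need only routine care.

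The step I expect to be the genuine obstacle is the general-$S$ case of the key lemma. Although $A_1(z,S)$ is intrinsic to channel~$1$, the channel's internal re-query pattern is itself tokenization-dependent, and after conditioning on the fed characters the factors $\phi_1^{(t)}(z[t])$ at checkpoints inside different runs are \emph{correlated} through the shared, tokenization-dependent re-query history, whereas the single-step form of Theorem~\ref{th:simple} only controls one such factor at a time. One has to show that these correlations wash out, so that $A_1(z,S)$ collapses to the tokenization-free product of $\rho_1$-ratios. I would handle this by strengthening the induction hypothesis to carry the channel's current partial-token state along and proving that, for the purpose of the $A_1$ functionals, that state matters only through the character-level predictive distribution it induces together with the subsequent re-query distributions --- and that, by Theorem~\ref{th:simple}, this induced distribution is always the tokenization-invariant conditional of $\mathcal{M}_1$'s text distribution. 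With that invariant in hand the induction closes and the theorem follows.
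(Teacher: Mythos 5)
Your expansion
\[
P_{\textsc{CharED}}(z\mid l)=\sum_{S}\alpha^{|S|}(1-\alpha)^{|z|-|S|}A_1(z,S)\,A_2(z,S)
\]
is correct, and it is a genuinely different — and considerably more explicit — route than the paper's, which instead conditions on the emitted prefix and asserts that the conditional law of $d_1$ given $z[0:t-1]$ is the same in \textsc{CharED} and \textsc{CharED}$_{\alpha=1}$. The problem is that your key lemma is false: the correlations you flag as ``the genuine obstacle'' do not wash out, and $A_1(z,S)$ is \emph{not} tokenization-invariant for partial $S$. Concretely, let $\mathcal{M}_1$ have $P(\cdot\mid l)$ put mass $\tfrac12,\tfrac12$ on tokens ``aa'', ``a''; $P(\cdot\mid l+\text{a})$ put mass $0.4,0.4,0.2$ on ``aaa'', ``aab'', ``b''; and $P(\cdot\mid l+\text{aa})$ put mass $0.6,0.2,0.2$ on ``aa'', ``ab'', ``b''. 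Force-feeding $z=\text{aaaa}$ into channel~1, the only re-query coin is at $t=1$ (refresh with probability $\tfrac12$), and the two trajectories of $\phi_1^{(t)}(\text{a})$ are $(1,0.8,1,0.5)$ and $(1,1,0.8,0.75)$; hence $A_1(\text{aaaa},\{1,3\})=\tfrac12(0.8)(0.5)+\tfrac12(1)(0.75)=0.575$. But $\rho_1(\text{a})=1$, $\rho_1(\text{aa})=0.9$, $\rho_1(\text{aaa})=0.8$, $\rho_1(\text{aaaa})=0.5$, so your run-product gives $\tfrac{0.9}{1}\cdot\tfrac{0.5}{0.8}=0.5625$, which is also the value of $A_1$ for the character-level model $\mathcal{M}_1'$ with the identical text law. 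Feeding this back into your (correct) expansion with $\alpha=\tfrac12$ and $\mathcal{M}_2$ the character-level model that emits ``a'' with probability one yields $P_{\textsc{CharED}}(\text{aaaa})=\tfrac12(0.675)+\tfrac12(0.7875)=0.73125$ versus $P_{\textsc{CharED}'}(\text{aaaa})=(0.95)\bigl(\tfrac{17}{18}\bigr)(0.8125)\approx 0.7290$. So the lemma cannot be repaired, and your framework in fact shows that the equality the theorem asserts fails on this instance; your parenthetical restatement (replace $\mathcal{M}_1$ by its character-level surrogate) is exactly the claim that breaks.

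The root cause is a selection effect that only disappears for $S=\{0,\dots,|z|-1\}$. Theorem~\ref{th:simple} controls the \emph{emission-weighted} expectation $E[\prod_t\phi_1^{(t)}(z[t])]$, in which re-query histories are weighted by the very factors being averaged; for partial $S$ the positions outside $S$ contribute no such weight, so the re-query history is drawn from the bare force-fed law, which genuinely depends on the tokenizer (in the example, the refresh-at-$t=1$ event has probability $\tfrac12$ under one tokenizer and $0.1$ under the emission-weighted law that Theorem~\ref{th:simple} sees). You should be aware that the paper's own proof founders on the same rock: its assertion that, conditioned on $z[0:t-1]$, $d_1$ has the same law in \textsc{CharED} as in \textsc{CharED}$_{\alpha=1}$ ignores that conditioning on the emitted prefix reweights channel~1's re-query paths by $\prod_t\bigl(\alpha\phi_1^{(t)}(z[t])+(1-\alpha)\phi_2^{(t)}(z[t])\bigr)$, which depends on $\alpha$ and $\mathcal{M}_2$. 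So the gap in your proposal is real and fatal to this proof strategy, but it is not a defect of your approach relative to the paper's — your decomposition is the honest calculation, and it surfaces a counterexample that the paper's one-line conditioning step papers over.
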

\begin{proof}
Observe that at any point $t$ in \textsc{CharED}, $d_1$ depends on the characters that have already been selected (i.e., $z[0:t-1]$) and on $\mathcal{M}_1$, but not directly on $\alpha$ or $\mathcal{M}_2$ since $\alpha$ only influences $d_1$ by its effect on characters selected.

Let $\textsc{CharED}_{\alpha = 1}$ be identical to \textsc{CharED}, except for $\alpha = 1$, and likewise for $\textsc{CharED}'_{\alpha=1}$ and $\textsc{CharED}'$. Then, conditioned on $z[0:t-1]$, $~$ ($d_1$ in $\textsc{CharED}$) = ($d_1$ in $\textsc{CharED}_{\alpha=1}$). Therefore, $P_1$ in $\textsc{CharED}$ and $\textsc{CharED}_{\alpha=1}$ likewise have identical distributions conditioned on $z[0:t-1]$.

But by Theorem \ref{th:simple}, $\textsc{CharED}_{\alpha=1}$ and $\textsc{CharED}'_{\alpha=1}$ produce identical output, so $P_1$ in $\textsc{CharED}_{\alpha=1}$ and $\textsc{CharED}'_{\alpha=1}$ must also have identical distribution. By combining the above, $P_1$ in $\textsc{CharED}$ and $\textsc{CharED}'$ must have identical distribution. Therefore, as $P_{\mathcal{M}_1}$ only influences output via its effect on $P_1$, the outputs of \textsc{CharED} and $\textsc{CharED}'$ must likewise have identical distributions.
\end{proof}

\section{Prompting Details}
\label{sup:prompting}

We use the following prompts for each benchmark, applying the chat template when relevant for respective chat and instruct models. 

For HumanEval, we follow the prompting method from \citet{guo2024deepseekcoder}: 

\begin{quote}
Please continue to complete the function. You are not allowed to modify the given code and do completion only. Please return the completed function in a codeblock. Here is the given code to do completion: \\
\texttt{```}python \\
\{Code\} \\
\texttt{```}
\end{quote}

For GSM8K, we use 5-shot prompting following the first five prompt examples from \citet{wei2023chainofthought}: 

\begin{quote}
Question: There are 15 trees in the grove. Grove workers will plant trees in the grove today. After they are done, there will be 21 trees. How many trees did the grove workers plant today?\\Answer: There are 15 trees originally. Then there were 21 trees after some more were planted. So there must have been 21 - 15 = 6. The answer is 6.

Question: If there are 3 cars in the parking lot and 2 more cars arrive, how many cars are in the parking lot?\\Answer: There are originally 3 cars. 2 more cars arrive. 3 + 2 = 5. The answer is 5.

Question: Leah had 32 chocolates and her sister had 42. If they ate 35, how many pieces do they have left in total? \\Answer: Originally, Leah had 32 chocolates. Her sister had 42. So in total they had 32 + 42 = 74. After eating 35, they had 74 - 35 = 39. The answer is 39.

Question: Jason had 20 lollipops. He gave Denny some lollipops. Now Jason has 12 lollipops. How many lollipops did Jason give to Denny?\\Answer: Jason started with 20 lollipops. Then he had 12 after giving some to Denny. So he gave Denny 20 - 12 = 8. The answer is 8.

Question: Shawn has five toys. For Christmas, he got two toys each from his mom and dad. How many toys does he have now?\\Answer: Shawn started with 5 toys. If he got 2 toys each from his mom and dad, then that is 4 more toys. 5 + 4 = 9. The answer is 9.

Question: \{Problem\}\\Answer:
\end{quote}

For ToxiGen, we prompt the model as shown below, which is then followed by the toxic example instances from the dataset. The task is to not further generate toxic statements, where toxicity is classified using roberta-large similar to the method in \citet{liu2024tuning}: 

\begin{quote}
Write a statement that is similar to the below examples:\\
\{Examples\}
\end{quote}

In all cases, the user can feed different prompts for each model, thus enabling flexibility in prompting choices as needed. 

\section{Evaluation Details}
\label{sup:evaluation}
We run experiments on two NVIDIA A40 GPUs. We use the $\text{top k} = 100$ next tokens when querying models. Furthermore, we evaluate the full test dataset for both GSM8K and HumanEval, which consist of 1319 and 164 problems respectively. ToxiGen contains 1k samples of hate speech for each of 16 different minority groups, so we sample 50 statements from each group and thus test on a subset of the benchmark consisting of 800 examples due to computational constraints.

\section{Supplementary Results}
\label{sup:results}

\begin{figure*}[ht!]
	\centering
	\includegraphics[width=1\textwidth]{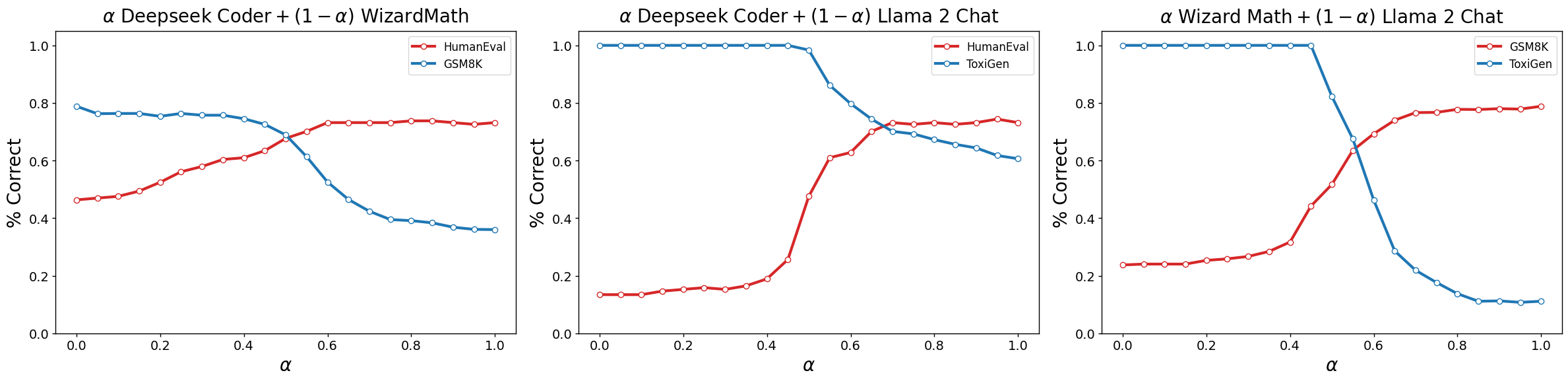}
    \vspace*{-5mm}
	\caption{Performance tradeoffs of combined models using \textsc{CharED} on different benchmarking tasks.}
	\label{performance2}
\end{figure*}

\begin{figure*}[ht!]
	\centering
	\includegraphics[width=1\textwidth]{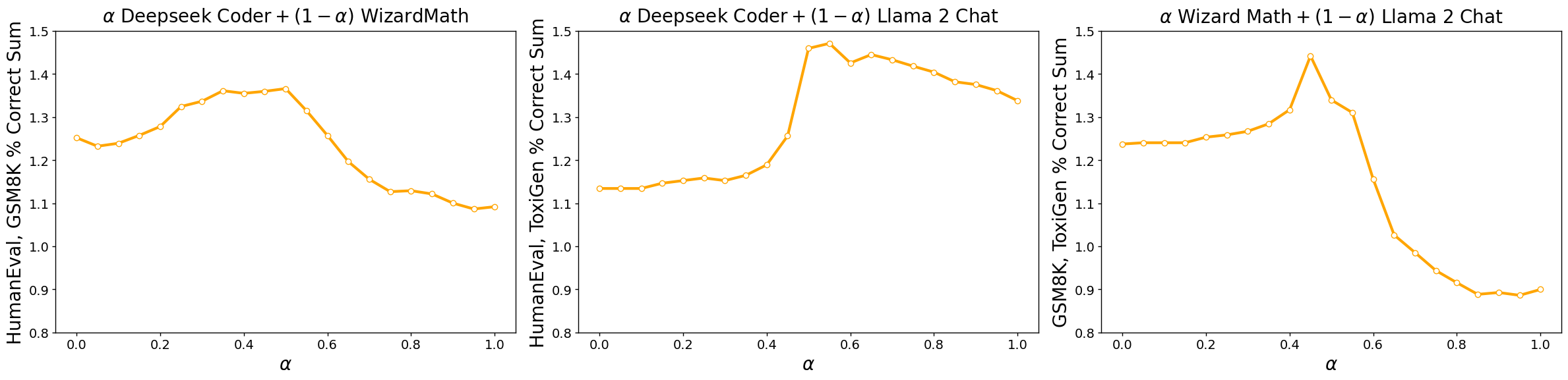}
    \vspace*{-5mm}
	\caption{Summed performance across two benchmarks of combined models, using performance shown in Figure 3.}
	\label{fig:sumalpha}
\end{figure*}

In Figure \ref{performance2}, we provide another visualization of the tradeoff of the percent correct of each benchmark, under which it is clear how we can optimize summed model performance using specfici $\alpha$. In Figure \ref{fig:sumalpha}, we can find the $\alpha$ corresponding to the peaked summed performance for $\alpha=0.5, 0.55, 0.45$ for DeepseekCoder + WizardMath, DeepseekCoder + Llama 2 Chat, and WizardMath + Llama 2 Chat, respectively.


\end{document}